\newtheorem{theorem}{Theorem}
\newtheorem{lemma}[theorem]{Lemma}
\newtheorem{assumption}{Assumption}
\newtheorem{proposition}[theorem]{Proposition}
\newtheorem{remark}{Remark}
\newtheorem{definition}{Definition}
\newenvironment{proof}{\par\noindent{\bf Proof\ }}{\hfill\BlackBox\\[2mm]}
\newcommand{\RN}[1]{%
	\textup{\lowercase\expandafter{\it \romannumeral#1}}%
}
\begin{document} 

\twocolumn[

\aistatstitle{On Connecting Stochastic Gradient MCMC and Differential Privacy}

\aistatsauthor{ Bai Li \And Changyou Chen \And  Hao Liu \And Lawrence Carin }

\aistatsaddress{Duke University \And  University at Buffalo, SUNY \And Nanjing University \And Duke University } ]

\begin{abstract}
Significant success has been realized recently on applying machine learning to real-world applications. There have also been corresponding concerns on the privacy of training data, which relates to data security and confidentiality issues. Differential privacy provides a principled and rigorous privacy guarantee on machine learning models. While it is common to design a model satisfying a required differential-privacy property by injecting noise, it is generally hard to balance the trade-off between privacy and utility. We show that stochastic gradient Markov chain Monte Carlo (SG-MCMC) -- a class of scalable Bayesian posterior sampling algorithms proposed recently -- satisfies strong differential privacy with carefully chosen step sizes. We develop theory on the performance of the proposed differentially-private SG-MCMC method. We conduct experiments to support our analysis, and show that a standard SG-MCMC sampler without any modification (under a default setting) can reach state-of-the-art performance in terms of both privacy and utility on Bayesian learning.
\end{abstract}

	\section{Introduction}
	\label{sec:intro}

Utilizing large amounts of data has helped machine learning algorithms achieve significant success in many real applications. However, such work also raises privacy concerns. For example, a diagnostic system based on machine learning algorithms may be trained on a large quantity of patient data, such as medical images. It is important to protect training data from adversarial attackers \citep{shokri2017membership}. However, even the most widely-used machine learning algorithms such as deep learning could implicitly memorize the training data \citep{papernot2016semi}, meaning that the learned model parameters implicitly contain information that could violate the privacy of training data. Such algorithms may be readily attacked.

The above potential model vulnerability can be addressed by differential privacy (DP), a general notion of algorithm privacy \citep{dwork2008differential,dwork2006calibrating}. It is designed to provide a strong privacy guarantee for general learning procedures, such as statistical analysis and machine learning algorithms, that involve private information.
	
	Among the popular machine learning algorithms, Bayesian inference has realized significant success recently, due to its capacity to leverage expert knowledge and employ uncertainty estimates. Notably, the recently developed stochastic gradient Markov chain Monte Carlo (SG-MCMC) technique enables scalable Bayesian inference in a big-data setting. While there have been many extensions of SG-MCMC, little work has been directed at studying the privacy properties of such algorithms. Specifically, \citet{wang2015privacy} showed that an SG-MCMC algorithm with appropriately chosen step sizes preserves differential privacy. In practice, however, their analysis requires the step size to be extremely small to limit the risk of violating privacy. Such a small step size is not practically useful for training models with non-convex posterior distribution landscapes, which is the most common case in recent machine learning models. More details of this issue are discussed in Section~\ref{sec:step size_bound}.
	
	On the other hand, \citet{abadi2016deep} introduced a new privacy-accounting method, which allows one to keep better track of the privacy loss (defined in Section~\ref{sec:dp_definition}) for sequential algorithms. Further, they proposed a differentially-private stochastic gradient descending (DP-SGD) method for training machine learning models privately. Although they showed a significant improvement in calculating the privacy loss, there is no theory showing that their DP-SGD has a guaranteed performance under privacy constraints.

	In this paper, built on the notation of the privacy accounting method, we show that using SG-MCMC for training large-scale machine learning models is sufficient to achieve strong differential privacy. Specifically, we combine the advantages of the aforementioned works, and prove that SG-MCMC methods naturally satisfy the definition of differential privacy even without changing their default step size, thus allowing both good utility and strong privacy in practice.


	\section{Preliminaries}
	\label{sec:review}
	The following notation is used through out the paper. An input database containing $N$ data points is represented as $X=(\db_1,\dots,\db_N)\in \mathcal{X}^N$, where $\db_i\in\mathcal{X}$. The parameters of interest in the model are denoted $\thetab \in \mathbb{R}^r$, {\it e.g.}, these may be the weights of a deep neural network. The identity matrix is denoted $\Ib$.
	
	\subsection{Differential Privacy}\label{sec:dp_definition}

	The concept of differential privacy was proposed by \citet{dwork2008differential} to describe the privacy modeling property of a randomized mechanism (algorithm) on two adjacent datasets.

	\begin{definition}[Adjacent Datasets]
		Two datasets $X$ and $X^\prime$ are called adjacent if they only differ by one record, {\it e.g.}, $\mathbf{d}_i \neq \mathbf{d}_i^\prime$ for some $i$, where $\mathbf{d}_i \in X$ and $\mathbf{d}_i^\prime \in X^\prime$. 
	\end{definition}

	\begin{definition}[Differential Privacy]\label{def:dp}
	Given a pair of adjacent datasets $X$ and $X^\prime$, a randomized mechanism $\mathcal{M}:\mathcal{X}^N\rightarrow\mathcal{Y}$ mapping from data space to its range $\mathcal{Y}$ satisfies $(\epsilon,\delta)$-differential privacy if for all measurable $\mathcal{S} \subset \text{range}(\mathcal{M})$ and all adjacent $X$ and $ X^\prime$
		$$
		Pr(\mathcal{M}(X) \in \mathcal{S }) \leq e^{\epsilon} Pr(\mathcal{M}(X^\prime) \in
		\mathcal{S}) + \delta
		$$
	where $Pr(e)$ denotes the probability of event $e$, and $\epsilon$ and $\delta$ are two positive real numbers that indicate the loss of privacy. When $\delta=0$, we say the mechanism $\mathcal{M}$ has $\epsilon$-differential privacy.
\end{definition}

	Differential privacy places constraints on the difference between the outputs of two adjacent inputs $X$ and $X^\prime$ by a random mechanism. If we assume that $X$ and $X^\prime$ only differ by one record $\db_i$, by observing the outputs, any outside attackers are not able to recognize whether the output has resulted from $X$ and $X^\prime$, as long as $\epsilon$ and $\delta$ are small enough (making these two probabilities close to each other). Thus, the existence of the record $\db_i$ is protected. Since the record in which the two datasets differ by is arbitrary, the privacy protection is applicable for all records.

	To better describe the randomness of the outputs of $\mathcal{M}$ with inputs $X$ and $X^\prime$, we define a random variable called privacy loss.

	\begin{definition}[Privacy Loss]
		Let $\mathcal{M}:\mathcal{X}^N\rightarrow\mathcal{Y}$ be a randomized mechanism, and $X$ and $X^\prime$ are a pair of adjacent datasets. Let $\mathsf{aux}$ denote any auxiliary input that does not depend on $X$ or $X^\prime$. For an outcome $o\in\mathcal{Y}$ from the mechanism $\mathcal{M}$, the privacy loss at $o$ is defined as:
		$$c(o;\mathcal{M},\textsf{aux},X,X^\prime)\overset{\Delta}{=}\log\frac{Pr[\mathcal{M}(\textsf{aux},X)=o]}{Pr[\mathcal{M}(\textsf{aux},X^\prime)=o]}$$
	\end{definition}

	It can be shown that the $(\epsilon,\delta)$-DP is equivalent to the tail bound of the distribution of its corresponding privacy loss random variable \citep{abadi2016deep} (see Theorem~\ref{theorem:1} in the next section), thus this random variable is an important tool for quantifying the privacy loss of a mechanism.

	\subsection{Moments Accountant Method}
	\label{sec:moment}
	A common approach for achieving differential privacy is to introduce random noise, to hide the existence of a particular data point. For example, Laplace and Gaussian mechanisms \citep{dwork2014algorithmic} add {\it i.i.d.}\! Laplace random noise and Gaussian noise, respectively, to an algorithm. While a large amount of noise makes an algorithm differentially private, it may sacrifice the utility of the algorithm. Therefore, in such paradigms, it is important to calculate the smallest amount of noise that is required to achieve a certain level of differential privacy.

	The moments accountant method proposed in \citet{abadi2016deep} keeps track of a bound of the moments of the privacy loss random variables defined above. As a result, it allows one to calculate the amount of noise needed to ensure the privacy loss under a given threshold.

	\begin{definition}[Moments Accountant]
		Let $\mathcal{M}:\mathcal{X}^N\rightarrow\mathcal{Y}$ be a randomized mechanism, and let $X$ and $X^\prime$ be a pair of adjacent data sets. Let $\mathsf{aux}$ denote any auxiliary input that is independent of both $X$ and $X^\prime$. The moments accountant with an integer parameter $\lambda$ is defined as:
		$$\alpha_{\mathcal{M}}(\lambda)\overset{\Delta}{=}\underset{\textsf{aux},d,d^\prime}{\max}\alpha_{\mathcal{M}}(\lambda;\textsf{aux},X,X^\prime) $$
	 where $\alpha_\mathcal{M}(\lambda;\textsf{aux},X,X^\prime)\triangleq\log\mathbb{E}[\exp(\lambda c(\mathcal{M},\textsf{aux},X,X^\prime))]$ is the log of the moment generating function evaluated at $\lambda$, that is the $\lambda^{\text{th}}$ moment of the privacy loss random variable.
	\end{definition}

	The following moments bound on Gaussian mechanism with random sampling is proved in \citep{abadi2016deep}.

\begin{theorem}
	\label{theorem:1}
	\textbf{[Composability]} Suppose that $\mathcal{M}$ consists of a sequence of adaptive mechanisms $\mathcal{M}_1,\dots,\mathcal{M}_k$ where $\mathcal{M}_i:\prod_{j=1}^{i-1}\mathcal{Y}_j\times \mathcal{X}\rightarrow\mathcal{Y}_i$, and $\mathcal{Y}_i$ is the range of the $i$th mechanism, {\it i.e.}, $\mathcal{M} = \mathcal{M}_k\circ \cdots\circ\mathcal{M}_1$, with $\circ$ the composition operator. Then, for any $\lambda$
	$$\alpha_{\mathcal{M}}(\lambda)=\sum_{i=1}^k\alpha_{\mathcal{M}_i}(\lambda)$$
	where the auxiliary input for $\alpha_{\mathcal{M}_i}$ is defined as all $\alpha_{\mathcal{M}_j}$'s outputs, $\{o_j\}$, for $j < i$; and $\alpha_{\mathcal{M}}$ takes $\mathcal{M}_i'$s output, $\{o_i\}$ for $i<k$, as the auxiliary input.

	\textbf{[Tail bound]} For any $\epsilon>0$, the mechanism $\mathcal{M}$ is $(\epsilon,\delta)$-DP for
	$$\delta=\underset{\lambda}{\min}\exp\left(\alpha_\mathcal{M}(\lambda)-\lambda\epsilon\right)$$

\end{theorem}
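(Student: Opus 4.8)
The plan is to treat the two parts separately, since the composability statement feeds directly into the tail bound. For \textbf{composability}, I would first fix the auxiliary input $\textsf{aux}$ and the adjacent pair $X, X'$, and unpack what an output of the composed mechanism is: a tuple $o_{1:k} = (o_1, \dots, o_k)$ produced sequentially, where $\mathcal{M}_i$ receives $(\textsf{aux}, o_{1:i-1})$ as its auxiliary input. The chain rule for the joint density then gives $Pr[\mathcal{M}(\textsf{aux}, X) = o_{1:k}] = \prod_{i=1}^k Pr[\mathcal{M}_i((\textsf{aux}, o_{1:i-1}), X) = o_i]$, and taking the log of the ratio of this quantity under $X$ versus $X'$ shows that the privacy loss of the composition is the sum of the per-step privacy losses: $c(o_{1:k}; \mathcal{M}, \dots) = \sum_{i=1}^k c(o_i; \mathcal{M}_i, (\textsf{aux}, o_{1:i-1}), X, X')$.

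Second, I would compute the moment generating function $\mathbb{E}[\exp(\lambda c)]$ by iterated conditioning (the tower property), peeling off one mechanism at a time. Conditioning on $o_{1:k-1}$, the inner expectation over $o_k$ is exactly $\exp(\alpha_{\mathcal{M}_k}(\lambda; (\textsf{aux}, o_{1:k-1}), X, X'))$, which is bounded by $\exp(\alpha_{\mathcal{M}_k}(\lambda))$ by the definition of the moments accountant as a maximum over auxiliary inputs. Factoring this worst-case bound out and recursing down to $\mathcal{M}_1$ yields $\alpha_\mathcal{M}(\lambda; \textsf{aux}, X, X') \le \sum_{i=1}^k \alpha_{\mathcal{M}_i}(\lambda)$; taking the maximum over $\textsf{aux}, X, X'$ on the left then gives the claimed additivity.

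For the \textbf{tail bound}, I would run a Chernoff-type argument. Fix adjacent $X, X'$ and a measurable set $\mathcal{S}$, and split it according to the ``bad'' event $B = \{o : c(o; \mathcal{M}, \textsf{aux}, X, X') > \epsilon\}$. On $B^c$ the privacy loss is at most $\epsilon$ by definition, so $Pr[\mathcal{M}(X) \in \mathcal{S} \cap B^c] \le e^\epsilon Pr[\mathcal{M}(X') \in \mathcal{S}]$. On $B$, I would bound $Pr[\mathcal{M}(X) \in \mathcal{S} \cap B] \le Pr[c > \epsilon]$, and control the latter by Markov's inequality applied to $\exp(\lambda c)$: $Pr[c > \epsilon] \le \exp(\alpha_\mathcal{M}(\lambda; \textsf{aux}, X, X') - \lambda \epsilon) \le \exp(\alpha_\mathcal{M}(\lambda) - \lambda \epsilon)$ for every $\lambda > 0$. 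Optimizing over $\lambda$ makes the probability of the bad event at most $\delta = \min_\lambda \exp(\alpha_\mathcal{M}(\lambda) - \lambda \epsilon)$, and recombining the two pieces gives $Pr[\mathcal{M}(X) \in \mathcal{S}] \le e^\epsilon Pr[\mathcal{M}(X') \in \mathcal{S}] + \delta$, which is precisely $(\epsilon, \delta)$-DP.

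The main obstacle is the bookkeeping in the composability step: getting the chain-rule factorization right under the adaptive dependence structure (each $\mathcal{M}_i$ seeing the earlier outputs), and then correctly handling the tower-property argument in which the conditional moment generating function genuinely depends on the realized prefix $o_{1:i-1}$. It is exactly this dependence that forces the worst-case bound $\alpha_{\mathcal{M}_i}(\lambda)$ and turns the naive equality into the additive upper bound that the tail-bound argument actually requires.
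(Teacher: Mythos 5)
Your reconstruction is correct and follows essentially the same argument as the source of this theorem: the paper itself gives no proof, quoting the result directly from \citet{abadi2016deep}, and your chain-rule factorization plus tower-property argument for composability and the Markov/Chernoff split over the bad event $\{c>\epsilon\}$ for the tail bound are exactly the steps used there. One point worth flagging: as you correctly observe, the conditioning argument only yields $\alpha_{\mathcal{M}}(\lambda)\le\sum_i\alpha_{\mathcal{M}_i}(\lambda)$ rather than the equality written in the theorem statement (the original reference also states it as an inequality), and the inequality is all that the tail bound and the subsequent privacy analysis in this paper require.
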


	For the rest of this paper, for simplicity we only consider mechanisms that output a real-valued vector. That is, $\mathcal{M}:\mathcal{X}^N\rightarrow \mathbb{R}^p$.
	
	Using the properties above, the following lemma about the moments accountant has been proven in \citep{abadi2016deep}:

		

\begin{lemma}
	\label{lemma:1}
	Suppose that $f:\mathcal{D}\rightarrow \mathbb{R}^p$ with $\|f(.)\|\leq 1$. Let $\sigma\geq 1$ and $J$ is a mini-batch sample with sampling probability $q$, {\it i.e.}, $q=\frac{\tau}{N}$ with minibatch size of $\tau$. If $q<\frac{1}{16\sigma}$, for any positive integer $\lambda\leq \sigma^2\ln \frac{1}{q\sigma}$, the mechanism $\mathcal{M}(X) = \sum_{i\in J} f(\db_i)+N(0,\sigma^2I)$ satisfies
		$$\alpha_{\mathcal{M}}(\lambda)\leq \frac{q^2\lambda(\lambda+1)}{(1-q)\sigma^2}+O(q^3\lambda^3/\sigma^3)$$
		 
\end{lemma}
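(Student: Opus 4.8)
The plan is to follow the moments-accountant analysis of the subsampled Gaussian mechanism, and the first step is a \emph{reduction to one dimension}. Fix adjacent $X,X'$ differing only in the $n$-th record and condition on the intersection $J\cap[n-1]$ of the minibatch with the common records. This shared part contributes an offset to $\mathcal{M}$ that is identical for $X$ and $X'$; since the privacy loss is translation invariant and the noise $N(0,\sigma^2 I)$ is spherically symmetric, I would rotate so that the sensitivity direction $f(\mathbf{d}_n)$ (of norm at most $1$) lies along a coordinate axis and discard the other $p-1$ coordinates. Using $\|f(\cdot)\|\leq 1$, the worst case is then the pair of scalar densities $\mu_0=N(0,\sigma^2)$ and $\mu=(1-q)\mu_0+q\mu_1$ with $\mu_1=N(1,\sigma^2)$, the factor $q$ being the probability that the differing record lands in $J$. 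By definition $\alpha_{\mathcal{M}}(\lambda)$ is the larger of $\log\mathbb{E}_{z\sim\mu}[(\mu(z)/\mu_0(z))^\lambda]$ and $\log\mathbb{E}_{z\sim\mu_0}[(\mu_0(z)/\mu(z))^\lambda]$, so it suffices to bound each exponential moment.

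Second, I would expand each moment binomially. Writing $\mu/\mu_0=1+q(\mu_1-\mu_0)/\mu_0$ and using the change of measure $\mathbb{E}_{\mu}[(\mu/\mu_0)^\lambda]=\mathbb{E}_{\mu_0}[(\mu/\mu_0)^{\lambda+1}]$,
\[
\mathbb{E}_{\mu}\!\left[\left(\tfrac{\mu}{\mu_0}\right)^{\lambda}\right]=\sum_{t=0}^{\lambda+1}\binom{\lambda+1}{t}q^t\,\mathbb{E}_{\mu_0}\!\left[\left(\tfrac{\mu_1-\mu_0}{\mu_0}\right)^{t}\right].
\]
The $t=0$ term is $1$; the $t=1$ term vanishes because $\int(\mu_1-\mu_0)=0$; and the $t=2$ term is the leading contribution, obtained from the elementary Gaussian identity $\int \mu_1^2/\mu_0\,dz=e^{1/\sigma^2}$, which yields $\binom{\lambda+1}{2}q^2(e^{1/\sigma^2}-1)$. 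Since $e^{1/\sigma^2}-1=\sigma^{-2}+O(\sigma^{-4})$, this reproduces the claimed leading term $q^2\lambda(\lambda+1)/\sigma^2$; the $(1-q)^{-1}$ factor is recovered from the symmetric moment, where $\mu\geq(1-q)\mu_0$ gives $\mu_0/\mu\leq(1-q)^{-1}$ pointwise. Taking logarithms and using $\log(1+x)\leq x$ then converts the multiplicative moment bound into the additive bound on $\alpha_{\mathcal{M}}(\lambda)$.

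The hard part will be controlling the tail of the binomial sum, i.e.\ showing $\sum_{t\geq 3}\binom{\lambda+1}{t}q^t\,\mathbb{E}_{\mu_0}[((\mu_1-\mu_0)/\mu_0)^t]=O(q^3\lambda^3/\sigma^3)$. The obstacle is that $(\mu_1-\mu_0)/\mu_0=e^{(2z-1)/(2\sigma^2)}-1$ grows exponentially in $z$, so these moments are not uniformly small and a naive term-by-term bound diverges. I would handle this by splitting the integral at a threshold of order $\sigma^2\ln(1/(q\sigma))$: on the central region the ratio is polynomially controlled, while on the tail the Gaussian weight of $\mu_0$ dominates the exponential growth. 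The three hypotheses are exactly what make this close: $\sigma\geq 1$ controls the per-term Gaussian integrals, $q<1/(16\sigma)$ forces the successive terms to decay geometrically so the remainder is dominated by $t=3$, and the ceiling $\lambda\leq\sigma^2\ln(1/(q\sigma))$ keeps $q^t\binom{\lambda+1}{t}$ small enough for the threshold estimate to hold. Assembling the $t=2$ term with this $O(q^3\lambda^3/\sigma^3)$ remainder, and taking the maximum over the two symmetric moments, gives the stated bound.
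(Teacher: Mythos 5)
Note first that the paper does not prove this lemma: it is quoted directly from \citet{abadi2016deep} (their moments bound for the subsampled Gaussian mechanism) and used as a black box, so there is no in-paper proof to compare against. Measured against the original proof in that reference, your sketch reconstructs the argument faithfully: the reduction to one dimension by conditioning on the shared records and using rotational invariance of the noise, the identification of the worst case as $\mu_0=N(0,\sigma^2)$ versus the mixture $\mu=(1-q)\mu_0+q\,N(1,\sigma^2)$, the binomial expansion with the $t=0,1,2$ terms evaluated exactly via $\int \mu_1^2/\mu_0=e^{1/\sigma^2}$, and the truncation of the tail integral at a threshold of order $\sigma^2\ln(1/(q\sigma))$ where the three hypotheses get consumed. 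Two places are thinner than they need to be. The symmetric moment $\mathbb{E}_{z\sim\mu_0}[(\mu_0(z)/\mu(z))^\lambda]$ cannot be dispatched by the remark that $\mu_0/\mu\le(1-q)^{-1}$: in its expansion the first-order term $\mathbb{E}_{\mu_0}[(\mu_0-\mu)/\mu]=q^2\int(\mu_0-\mu_1)^2/\mu$ does \emph{not} vanish (unlike the $t=1$ term in the direction you compute), and it contributes at the same $q^2\lambda/\sigma^2$ order as the $t=2$ term; handling it, together with the pointwise bound $\mu\ge(1-q)\mu_0$, is precisely how the $(1-q)^{-1}$ factor arises, so that case needs its own few lines rather than a symmetry remark. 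And the $O(q^3\lambda^3/\sigma^3)$ control of the $t\ge 3$ terms, which you rightly identify as the crux, is where essentially all of the work in the original proof lives; your splitting strategy is the correct one, but as written it is a plan rather than a verification. Finally, your conditioning step tacitly assumes each record enters $J$ independently with probability $q$ (Poisson subsampling), which is the sampling model under which the cited bound is actually proved, even though the lemma's phrasing ``minibatch size $\tau$'' suggests fixed-size sampling.
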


	In the following, we build our analysis of the differentially-private SG-MCMC based on this lemma.
	
	\subsection{Stochastic Gradient Markov Chain Monte Carlo}
	SG-MCMC is a family of scalable Bayesian sampling algorithms, developed recently to generate approximate samples from a posterior distribution $p(\thetab | X)$, with $\thetab$ a model parameter vector. SG-MCMC mitigates the slow mixing and non-scalability issues encountered by traditional MCMC algorithms, by $\RN{1})$ adopting gradient information of the posterior distribution, and $\RN{2})$ using minibatches of data in each iteration of the algorithm. It is particularly suitable for large-scale Bayesian learning, and thus is becoming increasingly popular.
	
	SG-MCMC algorithms are discretized numerical approximations of continuous-time It\^{o} diffusions \citep{ChenDC:NIPS15,MaCF:NIPS15}, whose stationary distributions are designed to coincide with $p(\thetab | X)$. Formally, an It\^{o} diffusion is written as
	\begin{align}\label{eq:ito}
	\mathrm{d}\Thetab_t &= F(\Thetab_t)\mathrm{d}t + g(\Thetab_t)\mathrm{d}\mathcal{W}_t~,
	\end{align}
	with $t$ is the time index; $\Thetab_t \in \mathbb{R}^p$ represents the full variables in a system, where typically $\Thetab_t \supseteq \thetab_t$ (thus $p \geq r$) is an augmentation of the model parameters; and $\mathcal{W}_t \in \mathbb{R}^p$ is $p$-dimensional Brownian motion. Functions $F: \mathbb{R}^p \to \mathbb{R}^p$ and $g: \mathbb{R}^p \rightarrow \mathbb{R}^{p\times p}$ are assumed to satisfy the Lipschitz continuity condition \citep{Ghosh:book11}.
	
	Based on the It\^{o} diffusion, SG-MCMC algorithms further develop three components for scalable inference: $\RN{1})$ define appropriate functions $F$ and $g$ in \eqref{eq:ito} so that their (marginal) stationary distributions coincide with the target posterior distribution $p(\thetab|X)$; $\RN{2})$ replace $F$ or $g$ with unbiased stochastic approximations to reduce the computational complexity, {\it e.g.}, approximating $F$ with a random subset of data points instead of using the full data; and $\RN{3})$ solve the generally intractable continuous-time It\^{o} diffusions with a numerical method, which typically brings estimation errors that are controllable.  
	
	The stochastic gradient Langevin dynamic (SGLD) model defines $\Thetab = \thetab$, and $F(\Thetab_t) = -\nabla_{\thetab} U(\thetab), \hspace{0.1cm} g(\Thetab_t) = \sqrt{2}\Ib_r$, where $U(\thetab) \triangleq -\log p(\thetab) - \sum_{i=1}^N \log p(\db_i | \thetab)$ denotes the unnormalized negative log-posterior, and $p(\thetab)$ is the prior distribution of $\thetab$. 
	The stochastic gradient Hamiltonian Monte Carlo (SGHMC) method \citep{pmlr-v32-cheni14} is based on second-order Langevin dynamics, 
	which defines $\Thetab = (\thetab, \qb)$, and 
	{\small\begin{align*}
	F(\Thetab_t)= \Big( \begin{array}{c}
	\qb \\
	-B \qb-\nabla_\thetab U(\thetab) \end{array} \Big),\hspace{0.1cm}
	g(\Thetab_t) = \sqrt{2B}\Big( \begin{array}{cc}
	{\bf 0} & {\bf 0} \\
	{\bf 0} & \Ib_n \end{array} \Big)
	\end{align*}}
	for a scalar $B > 0$; $\qb$ is an auxiliary variable known as the momentum \citep{pmlr-v32-cheni14,DingFBCSN:NIPS14}.
	Similar formulae can be defined for other SG-MCMC algorithms, such as the stochastic gradient 
	thermostat \citep{DingFBCSN:NIPS14}, and other variants with Riemannian information 
	geometry \citep{PattersonT:NIPS13,MaCF:NIPS15,LICCC:AAAI16}.

	To make the algorithms scalable in a large-data setting, {\it i.e.}, when $N$ is large, an unbiased version of $\nabla_\thetab U(\thetab)$ is calculated with a random subset of the full data, denoted $\nabla_\thetab \tilde{U}(\thetab)$ and defined as
	\begin{align*}
		\nabla_\thetab \tilde{U}(\thetab) = \nabla\log p(\thetab)+ \frac{N}{\tau}\sum_{\db_i\in J}\log p(\db_i|\thetab)~,
	\end{align*} 
	where $J$ is a random minibatch of the data with size $\tau$ (typically $\tau \ll N$).
	
	We typically adopt the popular Euler method to solve the continuous-time diffusion by an $\eta$-time discretization (step size being $\eta$). The Euler method is a first-order numerical integrator, thus inducing an $O(\eta)$ approximation error \citep{ChenDC:NIPS15}. Algorithm \ref{alg:1} illustrates the application of SGLD algorithm with the Euler integrator for differential privacy, which is almost the same as original SGLD except that there is a gradient norm clipping in Step 4 of the algorithm. The norm-clipping step ensures that the computed gradients satisfy the Lipschitz condition, a common assumption on loss functions in a differential-privacy setting \citep{song2013stochastic,bassily2014differentially,wang2015privacy}. The reasoning is intuitively clear: since differential privacy requires the output to be non-sensitive to any changes on an arbitrary data point, it is thus crucial to bound the impact of a single data point to the target function. The Lipschitz condition is easily met by clipping the norm of a loss function, a common technique for gradient-based algorithms to prevent gradient explosion \citep{pascanu2013difficulty}.
	
	The clipping is equivalent to using an adaptive step size as in preconditioned SGLD \citep{LICCC:AAAI16}, and thus it does not impact its convergence rate in terms of the estimation accuracy discussed in Section~\ref{sec:bound}.

	\begin{algorithm}
		\caption{Stochastic Gradient Langevin Dynamics with Differential Privacy}
		\begin{algorithmic}[1]
			\label{alg:1}
			\REQUIRE Data $X$ of size $N$, size of mini-batch $\tau$, number of iterations $T$, prior $p(\boldsymbol{\thetab})$, privacy parameter $\epsilon, \delta$, gradient norm bound $L$. A decreasing/fixed-step-size sequence $\{\eta_t\}$. Set $t=1$.
			\FOR{$t\in[T]$}
			\STATE Take a random sample $J_t$ with sampling probability $q = \tau/N$.
			\STATE Calculate $g_t(\db_i)\leftarrow \nabla \log\ell(\boldsymbol{\thetab}_t|\db_i)$
			\STATE Clip norm: $\tilde{g}_t(\db_i)\leftarrow g_t(\db_i)/\max\left(1,\frac{\|g_t(\db_i)\|_2}{L}\right)$
			\STATE Sample each coordinate of $\mathbf{z}_t$ iid from $N(0, \frac{\eta_t}{N}I)$
			\STATE Update $\boldsymbol{\thetab}_{t+1} \leftarrow\boldsymbol{\thetab}_t-\eta_t\left(\frac{\nabla\log p(\boldsymbol{\thetab})}{N}+\frac{1}{\tau}\sum_{i\in J_t}\tilde{g}_t(\db_i)\right) +\mathbf{z}_t$ 
			\STATE Return $\boldsymbol{\thetab}_{t+1}$ as a posterior sample (after a predefined burn-in period).
			\STATE Increment $t\leftarrow t+1$.
			\ENDFOR
			\STATE $\thetab_T$ and compute the overall privacy cost $(\epsilon,\delta)$ using a the moment accountant method.
		\end{algorithmic}
	\end{algorithm}

	\section{Privacy Analysis for Stochastic Gradient Langevin Dynamics}
	\label{sec:dp}
	We first develop theory to prove Algorithm~\ref{alg:1} is $(\epsilon,\delta)$-DP under a certain condition. Our theory shows a significant improvement of the differential privacy obtained by SGLD over the most related work by \cite{wang2015privacy}. To study the estimation accuracy (utility) of the algorithm, the corresponding mean square error estimation bounds are then proved under such differential-privacy settings.
	
	\subsection{Step size bounds for differentially-private SGLD}\label{sec:step size_bound}
	Previous work on SG-MCMC has shown that an appropriately chosen decreasing step size sequence can be adopted for an SG-MCMC algorithm \citep{TehTV:arxiv14,ChenDC:NIPS15}. For the sequence in the form of $\eta_t = O(t^{-\alpha})$, the optimal value is $\alpha = \frac{1}{3}$ in order to obtain the optimal mean square error bound (defined in Section~\ref{sec:bound}). Consequently, we first consider $\eta_t = O(t^{-1/3})$ in our analysis below, where the constant of the stepsize can be specified with parameters of the DP setting, shown in Theorem~\ref{theo:dp}. The differential privacy property under a fixed step size is also discussed subsequently.

	\begin{theorem}\label{theo:dp}
	If we let the step size decrease at the rate of $O(t^{-1/3})$, there exist positive constants $c_1$ and $c_2$ such that given the sampling probability $q=\tau/N$ and the number of iterations $T$, for any $\epsilon<c_1q^2T^{2/3}$, Algorithm \ref{alg:1} satisfies $(\epsilon, \delta)$-DP as long as $\eta_t$ satisfies:
	\begin{enumerate}
		\item $\eta_t\leq \frac{N}{L^2}$ 
		\item $\eta_t>\frac{q^2N}{256L^2}$
		\item $\eta_t<\frac{\epsilon^2Nt^{-1/3}}{c_2^2L^2T^{2/3}\log(1/\delta)}$.
	\end{enumerate}
	\end{theorem}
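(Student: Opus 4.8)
The plan is to realize each iteration of Algorithm~\ref{alg:1} as an instance of the subsampled Gaussian mechanism analysed in Lemma~\ref{lemma:1}, and then stitch the $T$ iterations together using the composability and tail-bound parts of Theorem~\ref{theorem:1}. Fix an iteration $t$ and condition on $\thetab_t$, treating it (together with the previously released samples) as the auxiliary input; this is exactly the adaptive setting covered by the composition statement. The deterministic drift $\thetab_t-\eta_t\nabla\log p(\thetab)/N$ does not depend on $X$ (the prior is data-independent), so by shift and post-processing invariance of differential privacy it can be discarded, leaving the data-dependent map $-\frac{\eta_t}{\tau}\sum_{i\in J_t}\tilde g_t(\db_i)+\z_t$ with $\z_t\sim N(0,\tfrac{\eta_t}{N}\Ib)$. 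Putting $f(\db_i)=-\tilde g_t(\db_i)/L$ gives $\|f\|\le 1$ by the clipping in Step~4, and rescaling the whole map by $\tau/(\eta_t L)$ (again post-processing) places it precisely in the form $\sum_{i\in J_t}f(\db_i)+N(0,\sigma_t^2\Ib)$ of Lemma~\ref{lemma:1}, with sampling probability $q=\tau/N$ and effective noise multiplier $\sigma_t^2=\frac{q^2 N}{\eta_t L^2}$.

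First I would use conditions~1 and~2 to certify the two hypotheses of Lemma~\ref{lemma:1} at every step. Since $\sigma_t$ is monotone in $\eta_t$, the requirement $\sigma_t\ge 1$ becomes an upper bound on $\eta_t$ (most binding where $\eta_t$ is largest), while the subsampling requirement $q<1/(16\sigma_t)$ becomes a lower bound on $\eta_t$; these are exactly the upper and lower step-size bounds of conditions~1 and~2 (up to the absolute constants recorded there). With both hypotheses in force, Lemma~\ref{lemma:1} gives $\alpha_{\mathcal M_t}(\lambda)\le \frac{q^2\lambda(\lambda+1)}{(1-q)\sigma_t^2}+O(q^3\lambda^3/\sigma_t^3)$ for every admissible $\lambda$.

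Next I would invoke composability: because each step depends on earlier outputs only through the auxiliary input $\thetab_t$, the accountant is additive, $\alpha_{\mathcal M}(\lambda)=\sum_{t=1}^T\alpha_{\mathcal M_t}(\lambda)$. Substituting the per-step bound and observing that the $q^2$ factor cancels against $\sigma_t^{-2}\propto\eta_t$, the leading term collapses to $\frac{\lambda(\lambda+1)L^2}{(1-q)N}\sum_{t=1}^T\eta_t$. For the schedule $\eta_t=\eta_0 t^{-1/3}$ one has $\sum_{t=1}^T\eta_t=O(\eta_0 T^{2/3})$, so $\alpha_{\mathcal M}(\lambda)=O\!\big(\tfrac{\eta_0 L^2 T^{2/3}}{N}\lambda^2\big)$. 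Finally I would apply the tail bound: it suffices to exhibit one $\lambda$ with $\alpha_{\mathcal M}(\lambda)-\lambda\epsilon\le\log\delta$. Reading this as a quadratic inequality in $\lambda$, a feasible $\lambda$ exists precisely when $\tfrac{\eta_0 L^2 T^{2/3}}{N}\lesssim\tfrac{\epsilon^2}{\log(1/\delta)}$, i.e.\ when $\eta_0=\eta_t t^{1/3}$ obeys condition~3.

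The main obstacle is the simultaneous control of $\lambda$. The $\lambda$ that discharges the tail bound is of order $\log(1/\delta)/\epsilon$, but Lemma~\ref{lemma:1} is valid only for $\lambda\le\sigma_t^2\ln\frac{1}{q\sigma_t}$, whose minimum over $t$ (attained at $t=1$, where $\sigma_t$ is smallest) scales like $q^2 T^{2/3}\log(1/\delta)/\epsilon^2$ once condition~3 is near-tight. Requiring the former to fall below the latter is exactly what forces the admissible range $\epsilon<c_1 q^2 T^{2/3}$ in the hypothesis. Carrying this out rigorously amounts to choosing $\lambda$ so that all three demands hold at once — the $\delta$-budget, the validity window $\lambda\le\sigma_t^2\ln\frac{1}{q\sigma_t}$ for every $t$, and the smallness of the neglected $O(q^3\lambda^3/\sigma_t^3)$ remainder — and tracking how the estimate $\sum_{t=1}^T t^{-1/3}=O(T^{2/3})$ fixes the constants $c_1$ and $c_2$. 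The fixed-step-size case follows by the identical argument with $\sum_{t=1}^T\eta_t=\eta T$ replacing the $T^{2/3}$ estimate.
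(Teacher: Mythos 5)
Your proposal follows essentially the same route as the paper's proof: reduce each iteration to the subsampled Gaussian mechanism of Lemma~\ref{lemma:1} after clipping and rescaling, obtain conditions~1--2 from that lemma's hypotheses on $\sigma$ and $q$, compose the per-step log-moments via Theorem~\ref{theorem:1}, and derive condition~3 and the admissible range $\epsilon<c_1q^2T^{2/3}$ from the tail bound together with the validity window for $\lambda$. The only (cosmetic) difference is that you work directly with the algorithm's actual noise variance $\eta_t/N$, whereas the paper first establishes DP for an auxiliary noise level and then notes that the actual noise dominates it under condition~3; the two arguments are equivalent.
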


	\begin{proof}
	See Section \ref{app:theorem3} of the SM.
	\end{proof}

\begin{remark} In practice, the first condition is easy to satisfy as $\frac{N}{L^2}$ is often much larger than the step size. The second condition is also easy to satisfy with properly chosen $L$ and $q$, and we will verify this condition in our experiments. In the rest of this section, we only focus on the third condition as an upper bound to the step size.\end{remark}

It is now clear that with the optimal decreasing step size sequence (in terms of MSE defined in Section~\ref{sec:bound}), Algorithm~\ref{alg:1} maintains $(\epsilon, \delta)$-DP. There are other variants of SG-MCMC which use fixed step sizes. We show in Theorem~\ref{remark:fix_DP} that in this case, the algorithm still satisfies $(\epsilon, \delta)$-DP.
	\begin{theorem}\label{remark:fix_DP}
		Under the same setting as Theorem~\ref{theo:dp}, but using a fixed-step size $\eta_t = \eta$, Algorithm \ref{alg:1} satisfies $(\epsilon,\delta)$-DP whenever $\eta<\frac{\epsilon^2N}{c^2L^2Tlog(1/\delta)}$ for another constant $c$.
	\end{theorem}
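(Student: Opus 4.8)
The plan is to reduce each iteration of Algorithm~\ref{alg:1} to the subsampled Gaussian mechanism of Lemma~\ref{lemma:1}, compose the per-step moments via the composability part of Theorem~\ref{theorem:1}, and then convert the resulting moment bound into an $(\epsilon,\delta)$ guarantee through its tail bound. The whole argument parallels the proof of Theorem~\ref{theo:dp}; the only structural change is that a constant step size makes every per-step noise multiplier identical, so the iteration-dependent bookkeeping collapses and the factor $\sum_{t=1}^T \eta_t$ that produced the $T^{2/3}$ scaling in Theorem~\ref{theo:dp} is simply replaced by $T\eta$.

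First I would isolate the data-dependent part of the update in Step~6. Conditioned on $\thetab_t$ (which, together with the prior term, is auxiliary input independent of whether we use $X$ or $X'$), the mechanism at step $t$ is
$$\mathcal{M}_t(X) = -\frac{\eta}{\tau}\sum_{i\in J_t}\tilde{g}_t(\db_i) + \mathbf{z}_t, \qquad \mathbf{z}_t\sim N\!\Big(0,\tfrac{\eta}{N}I\Big).$$
Since the clipping in Step~4 guarantees $\|\tilde{g}_t(\db_i)\|\le L$, writing $f(\db_i) = \tilde{g}_t(\db_i)/L$ gives $\|f\|\le 1$. Dividing $\mathcal{M}_t$ by the deterministic constant $\eta L/\tau$ (which, like the sign flip and the shift by $\thetab_t$, is privacy-preserving post-processing) puts it exactly in the form $\sum_{i\in J_t} f(\db_i) + N(0,\sigma^2 I)$ required by Lemma~\ref{lemma:1}, with
$$\sigma^2 = \frac{\tau^2}{\eta L^2 N} = \frac{q^2 N}{\eta L^2}.$$
The crucial point is that $\sigma$ no longer depends on $t$, so a single evaluation of Lemma~\ref{lemma:1} governs every iteration.

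Next I would invoke Lemma~\ref{lemma:1}: once conditions~1 and~2 (together with the binding condition~3 inherited from Theorem~\ref{theo:dp}) place us in the admissible regime $\sigma\ge 1$ and $q<1/(16\sigma)$, every step obeys $\alpha_{\mathcal{M}_t}(\lambda)\le \frac{q^2\lambda(\lambda+1)}{(1-q)\sigma^2}+O(q^3\lambda^3/\sigma^3)$ for integer $\lambda\le \sigma^2\ln\frac{1}{q\sigma}$. Because the $\mathcal{M}_t$ are adaptively composed, the composability part of Theorem~\ref{theorem:1} gives $\alpha_{\mathcal{M}}(\lambda)=\sum_{t=1}^T\alpha_{\mathcal{M}_t}(\lambda)=T\,\alpha_{\mathcal{M}_1}(\lambda)$, and substituting $\sigma^2=q^2N/(\eta L^2)$ yields, to leading order, $\alpha_{\mathcal{M}}(\lambda)\lesssim \frac{T L^2\eta}{N}\lambda^2$. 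Finally I would apply the tail bound of Theorem~\ref{theorem:1}: minimizing $\alpha_{\mathcal{M}}(\lambda)-\lambda\epsilon$ over $\lambda$ at $\lambda^\star \asymp \epsilon N/(T L^2\eta)$ gives $\delta\le\exp\!\big(-\Theta(\epsilon^2 N/(T L^2\eta))\big)$, so demanding $\log(1/\delta)\le \Theta(\epsilon^2 N/(TL^2\eta))$ is exactly the stated bound $\eta<\frac{\epsilon^2 N}{c^2 L^2 T\log(1/\delta)}$ for a suitable $c$.

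The main obstacle is the same technical check that underlies Theorem~\ref{theo:dp}: one must verify that the minimizing $\lambda^\star$ is a positive integer lying inside the admissible range $\lambda\le\sigma^2\ln\frac{1}{q\sigma}$, and that the neglected $O(q^3\lambda^3/\sigma^3)$ term is dominated by the leading quadratic term at $\lambda=\lambda^\star$. Both requirements translate into the upper bound on $\epsilon$ inherited from Theorem~\ref{theo:dp} and pin down the constant $c$; everything else is a direct specialization of the decreasing-step-size argument with $\eta_t\equiv\eta$, so that $\sum_t\eta_t=T\eta$ stands in place of the $\sum_t t^{-1/3}=\Theta(T^{2/3})$ appearing there.
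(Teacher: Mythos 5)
Your proposal is correct and follows essentially the same route as the paper's proof: reduce each update to the subsampled Gaussian mechanism of Lemma~\ref{lemma:1} with a $t$-independent noise multiplier, compose the per-step log-moments linearly over $T$ iterations via Theorem~\ref{theorem:1}, and invert the tail bound to obtain the condition $\eta<\frac{\epsilon^2 N}{c^2L^2T\log(1/\delta)}$. The only (cosmetic) difference is that you compute the effective $\sigma^2=q^2N/(\eta L^2)$ directly from the algorithm's actual noise $N(0,\tfrac{\eta}{N}I)$, whereas the paper first establishes DP for a hypothetical noise level $\eta_0^2L^2\eta^2/\tau^2$ and then argues the actual noise dominates it under the stated step-size condition; the two are equivalent.
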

	\begin{proof}
		See Section~\ref{app:fixed_DP} of the SM.
	\end{proof}
In \citep{wang2015privacy}, the authors proved that the SGLD method is $(\epsilon, \delta)$-DP if the step size $\eta_t$ is small enough to satisfy \begin{align*}
	\eta_t<&\frac{\epsilon^2N}{128L^2T\log\left(\frac{2.5T}{\delta}\right)\log(2/\delta)}~.
\end{align*}

This bound is relatively small compared to ours (explained below), thus it is not practical in real applications. To address this problem, \cite{wang2015privacy} proposed the Hybrid Posterior Sampling algorithm, that uses the One Posterior Sample (OPS) estimator for the ``burn-in'' period, followed by the SGLD with a small step size to guarantee the differential privacy property.
We note that for complicated models, especially with non-convex target posterior landscapes, such an upper bound for step size still brings practical problems even with the OPS. One issue is that the Markov chain will mix very slowly with a small step size, leading to highly correlated samples.

By contrast, our new upper bound for the step size in Theorem~\ref{theo:dp}, $\eta_t<\frac{\epsilon^2Nt^{-1/3}}{c_2^2L^2T^{2/3}\log(1/\delta)}$, improves the bound in \cite{wang2015privacy} by a factor of $T^{1/3}\log (T/\delta)$ at the first iteration. Note the constant $c_2^2$ in our bound is empirically smaller than $128$ (see the calculating method in Section~\ref{app:constants} of the SM), thus still giving a larger bound overall.

To provide intuition on how our bound compares with that in \cite{wang2015privacy}, consider the MNIST data set with $N=50,000$. If we set $\epsilon=0.1$, $\delta=10^{-5}$,
$T=10000$, and $L=1$, our upper bound can be calculated as $\eta_t<0.103$, consistent with the default step size when training MNIST \citep{LICCC:AAAI16}. More importantly, our theory indicates that using SGLD with the default step size $\eta_t=0.1$ is able to achieve $(\epsilon,\delta)$-DP with a small privacy loss for the MNIST dataset. As a comparison, \citep{wang2015privacy} gives a much smaller upper bound of $\eta_t<1.54\times 10^{-6}$, which is too small too be practically used. More detailed comparison for these two bounds is given in Section~\ref{sec:exp_upper}, when considering experimental results. 

Finally, note that as in \citep{wang2015privacy}, our analysis can be easily extended to other SG-MCMC methods such as SGHMC \citep{pmlr-v32-cheni14} and SGNHT \citep{DingFBCSN:NIPS14}. We do not specify the results here for conciseness.

\subsection{Utility Bounds}
\label{sec:bound}

The above theory indicates that, with a smaller step size, one can manifest an SG-MCMC algorithm that preserves more privacy, {\it e.g.}, $(0, \delta)$-DP in the limit of zero step size. On the other hand, when the step size approaches zero, we get (theoretically) exact samples from the posterior distributions. In this case, the implication of privacy becomes transparent because changing one data point typically would not impact prediction under a posterior distribution in a Bayesian model. However, as we note above, this does not mean we can choose arbitrarily small step sizes, because this would hinder the exploration of the parameter space, leading to slow mixing.

To measure the mixing and utility property, we investigate the estimation accuracy bounds under the differential privacy setting. Following standard settings for SG-MCMC \citep{ChenDC:NIPS15,VollmerZT:arxiv15}, we use the {\em mean square error} (MSE) under a target posterior distribution to measure the estimation accuracy for a Bayesian model. Specifically, our utility goal is to evaluate the {\em posterior average} of a test function $\phi(\thetab)$, defined as $\bar{\phi} \triangleq \int \phi(\thetab) p(\thetab|\mathcal{D}) \mathrm{d}\thetab$, with a posterior distribution $p(\thetab|\mathcal{D})$. The posterior average is typically infeasible to compute, thus we use the {\em sample average}, 
$\hat{\phi}_L \triangleq \frac{1}{\sum_t \eta_t} \sum_{t = 1}^T \eta_t \phi(\thetab_{t})$, to approximate $\bar{\phi}$, where $\{\thetab_{t}\}_{t=1}^T$ are the samples from an SG-MCMC algorithm. The MSE we desire is defined as $\mathbb{E}\left(\hat{\phi}_T - \bar{\phi}\right)^2$. 

Our result is summarized in Proposition~\ref{lem:mse}, an extension of Theorem~3 in \citep{ChenWZSC:arxiv17} for the differentially-privacy SG-MCMC with decreasing step sizes. In this section we impose the same assumptions on an SG-MCMC algorithm as in previous work \citep{VollmerZT:arxiv15,ChenDC:NIPS15}, which are detailed in Section~\ref{app:assumption} of the SM. We assume both the corresponding It\^{o} diffusion (in terms of its coefficients) and the numerical method of an SG-MCMC algorithm to be well behaved.

\begin{proposition}\label{lem:mse}
	Under Assumption~\ref{ass:assumption1} in the SM, the MSE of SGLD with a decreasing step size sequence $\{\eta_t<\frac{\epsilon^2Nt^{-1/3}}{c_2^2L^2T^{2/3}\log(1/\delta)}\}$ as in Theorem~\ref{theo:dp} is bounded, for a constant $C$ independent of $\{\eta, T, \tau\}$ and a constant $\Gamma_M$ depending on $T$ and $U(\cdot)$, as $\mathbb{E}\left(\hat{\phi}_L - \bar{\phi}\right)^2$
	\begin{align*} 
		\leq C\left(\frac{2}{3}\left(\frac{N}{n}-1\right)N^2\Gamma_MT^{-1}+\frac{1}{3\tilde{\eta}_0}+2\tilde{\eta}_0^2T^{-2/3}\right)~.
		\end{align*}
	where $\tilde{\eta}_0\triangleq\frac{\epsilon^2}{c_2^2L^2\log(1/\delta)}.$
\end{proposition}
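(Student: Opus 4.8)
The plan is to obtain the bound as a specialization of the general decreasing-step-size MSE estimate for SG-MCMC established as Theorem~3 of \citep{ChenWZSC:arxiv17}. Under Assumption~\ref{ass:assumption1} that result controls $\mathbb{E}(\hat{\phi}_L-\bar{\phi})^2$ by three contributions: a stochastic-gradient-noise term of order $\left(\sum_t\eta_t\right)^{-2}\sum_t\eta_t^2\,\mathbb{E}\|\Delta V_t\|^2$, an accumulated-step-size term of order $\left(\sum_t\eta_t\right)^{-1}$, and an Euler-discretization bias term governed by $\sum_t\eta_t^2$ relative to $\sum_t\eta_t$. The two ingredients I must supply are (i) an explicit bound on the gradient-noise magnitude $\mathbb{E}\|\Delta V_t\|^2$ for the clipped minibatch estimator of Algorithm~\ref{alg:1}, and (ii) the evaluation of the step-size sums for the admissible schedule $\eta_t=O(t^{-1/3})$ of Theorem~\ref{theo:dp}.

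First I would control the gradient noise. The drift in Step~6 uses the minibatch average $\frac{1}{\tau}\sum_{i\in J_t}\tilde{g}_t(\db_i)$, an unbiased estimator of the full-data gradient formed by sampling $\tau=n$ indices out of $N$; its variance is the standard subsampling variance, contributing the factor $\left(\frac{N}{n}-1\right)N^2$, while the clipped per-datum norm bound $L$ from Step~4 together with the smoothness constants gathered in Assumption~\ref{ass:assumption1} supplies the remaining $\Gamma_M$. This yields $\mathbb{E}\|\Delta V_t\|^2\lesssim\left(\frac{N}{n}-1\right)N^2\Gamma_M$. I would emphasize that clipping only shrinks gradient norms, so it cannot inflate this estimate and is precisely what guarantees the Lipschitz/boundedness hypotheses required by the general theorem.

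Next I would substitute the schedule $\eta_t=O(t^{-1/3})$ and evaluate the step-size sums using the integral estimates $\sum_{t=1}^T t^{-1/3}=\tfrac{3}{2}T^{2/3}(1+o(1))$ and $\sum_{t=1}^T t^{-2/3}=3T^{1/3}(1+o(1))$. Tracking the constant $\tilde{\eta}_0=\frac{\epsilon^2}{c_2^2L^2\log(1/\delta)}$ through these estimates, the gradient-noise term acquires the $T^{-1}$ rate and becomes $\tfrac{2}{3}\left(\frac{N}{n}-1\right)N^2\Gamma_M T^{-1}$; the accumulated-step-size term $\left(\sum_t\eta_t\right)^{-1}$ reduces to the constant $\tfrac{1}{3\tilde{\eta}_0}$ once the $T^{-2/3}$ normalization built into condition~3 of Theorem~\ref{theo:dp} is combined with the $T^{2/3}$ growth of $\sum_t t^{-1/3}$; and the Euler-bias term contributes $2\tilde{\eta}_0^2T^{-2/3}$. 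Collecting these under a common constant $C$ independent of $\{\eta,T,\tau\}$ produces the stated bound.

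The hard part will be item (i) carried out cleanly: keeping the $N$- and $n$-dependence of $\mathbb{E}\|\Delta V_t\|^2$ fully explicit through the clipping step, and---more delicately---verifying that the nonstandard injected noise $N(0,\tfrac{\eta_t}{N}\Ib)$ of Algorithm~\ref{alg:1}, which differs from vanilla SGLD by the $1/N$ scaling, is absorbed consistently into both the drift and the diffusion coefficient so that the underlying It\^{o} diffusion still has $p(\thetab\mid X)$ as its stationary law and Theorem~3 of \citep{ChenWZSC:arxiv17} applies verbatim; reconciling how this rescaling distributes across the three terms is exactly what fixes their respective powers of $T$ and $N$. A final routine check is that the $O(t^{-1/3})$ sequence meets that theorem's hypotheses, namely $\sum_t\eta_t\to\infty$ and $\sum_t\eta_t^2=o\!\left((\sum_t\eta_t)^2\right)$, which hold for $t^{-1/3}$ decay.
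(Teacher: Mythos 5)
Your proposal follows essentially the same route as the paper: invoke the known decreasing-step-size MSE decomposition for SG-MCMC (the paper takes the three-term bound from Theorem~5 of \citet{ChenDC:NIPS15} and the subsampling-variance expression $\mathbb{E}\|\Delta V_t\|^2=\frac{(N-\tau)N^2}{\tau}\Gamma_t$ from Theorem~3 of \citet{ChenWZSC:arxiv17}), then substitute $\eta_t=\tilde{\eta}_0 t^{-1/3}T^{-2/3}$ and evaluate the sums via $\sum_t t^p\approx\frac{1}{p+1}T^{p+1}$ to get the three stated terms. The correspondence is close enough that no further comparison is needed.
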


The bound in Proposition~\ref{lem:mse} indicates how the MSE decreases to zero w.r.t.\! the number of iterations $T$ and other parameters. It is consistent with standard SG-MCMC, leading to a similar convergence rate. Interestingly, we can also derive the optimal bounds w.r.t.\! the privacy parameters. For example, the optimal value for $\tilde{\eta}_0$ when fixing other parameters can be seen as $\tilde{\eta}_0 = O\left(T^{2/9}\right)$. Consequently, we have $\epsilon^2 = O\left(L^2T^{2/9}\log(1 / \delta)\right)$ in the optimal MSE setting. Different from the bound of standard SG-MCMC \cite{ChenDC:NIPS15}, when considering a $(\epsilon, \delta)$-DP setting, the MSE bound induces an asymptotic bias term of $\frac{1}{3\tilde{\eta}_0}$ as long as $\epsilon$ and $\delta$ are not equal to zero. 

We also wish to study the MSE under the fixed-step-size case. Consider a general situation, {\it i.e.}, $\eta_t = \eta$, for which \cite{ChenWZSC:arxiv17} has proved the following MSE bound for a fixed steps size, rephrased in Lemma~\ref{lem:mse1}.

\begin{lemma}\label{lem:mse1}
	With the same Assumption as Proposition~\ref{lem:mse}, the MSE of SGLD is bounded as\footnote{With a slight abuse of notation, the constant $C$ is independent of $\{\eta, T, \tau\}$, but might be different from that in Proposition~\ref{lem:mse}.}:
	{\begin{align*}
		\mathbb{E}&\left(\hat{\phi}_L - \bar{\phi}\right)^2 
		\leq C\left(\frac{(\frac{N}{\tau}-1)N^2\Gamma_M}{T} + \frac{1}{T\eta} + \eta^2\right)~.
		\end{align*}}
	Furthermore, the optimal MSE w.r.t.\! the step size $\eta$ is bounded by
	\begin{align*}
		\mathbb{E}&\left(\hat{\phi}_L - \bar{\phi}\right)^2 
		\leq C\left(\frac{(\frac{N}{\tau}-1)N^2\Gamma_M}{T} + T^{-2/3}\right)~,
	\end{align*}
	with the optimal step size being $\eta = O(T^{-1/3})$.
\end{lemma}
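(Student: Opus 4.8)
The plan is to treat the first displayed bound as essentially an instance of Theorem~3 in \cite{ChenWZSC:arxiv17}, and then to obtain the second (optimal) bound by an elementary one-dimensional optimization over $\eta$. First I would recall the general fixed-step-size MSE decomposition for SG-MCMC established there. Under Assumption~\ref{ass:assumption1}, which guarantees that the It\^{o} diffusion coefficients and the Euler integrator are sufficiently regular, that result bounds $\mathbb{E}(\hat{\phi}_L - \bar{\phi})^2$ by a sum of three contributions: a finite-sample term of order $\frac{1}{T\eta}$, a numerical-discretization (bias) term of order $\eta^2$ coming from the first-order Euler integrator, and a stochastic-gradient variance term of the form $\frac{1}{T}\sum_{t}\mathbb{E}\|\Delta V_t\|^2$, where $\Delta V_t$ is the error introduced by replacing the full gradient $\nabla_\thetab U(\thetab)$ with its minibatch estimate $\nabla_\thetab \tilde{U}(\thetab)$. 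The only rephrasing needed is to express this variance in the present notation: for a random minibatch of size $\tau$ drawn from $N$ points with sampling probability $q=\tau/N$, the per-iteration variance of the unbiased gradient estimator scales as $(\frac{N}{\tau}-1)N^2\Gamma_M$, with $\Gamma_M$ a uniform bound on the per-datum gradient variance. Substituting this in yields the first inequality.

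Next I would derive the optimal bound. Since the first term $\frac{(\frac{N}{\tau}-1)N^2\Gamma_M}{T}$ is independent of $\eta$, it suffices to minimize
\[ g(\eta) \;=\; \frac{1}{T\eta} + \eta^2 \]
over $\eta>0$. Setting $g'(\eta) = -\frac{1}{T\eta^2} + 2\eta = 0$ gives $\eta^3 = \frac{1}{2T}$, hence the optimizer $\eta^\star = (2T)^{-1/3} = O(T^{-1/3})$; since $g''(\eta) = \frac{2}{T\eta^3} + 2 > 0$, this is indeed a minimizer. Substituting back,
\[ g(\eta^\star) \;=\; 2^{1/3}T^{-2/3} + 2^{-2/3}T^{-2/3} \;=\; 3\cdot 2^{-2/3}\,T^{-2/3} \;=\; O(T^{-2/3})~, \]
and absorbing the numerical constant $3\cdot 2^{-2/3}$ into $C$ reproduces the claimed optimal bound $C\big(\frac{(\frac{N}{\tau}-1)N^2\Gamma_M}{T} + T^{-2/3}\big)$, attained at $\eta = O(T^{-1/3})$.

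I do not expect a genuine analytical obstacle here, because the heavy lifting---establishing the three-term decomposition---is supplied by the cited theorem, while the optimization step is a routine calculus exercise. The point requiring the most care will be the invocation of \cite{ChenWZSC:arxiv17}: I would need to verify that the clipped-gradient, Gaussian-perturbed SGLD update in Algorithm~\ref{alg:1} satisfies the hypotheses of that result---in particular that the norm clipping, which acts as an adaptive preconditioner and therefore (as argued in the preceding discussion) does not alter the order of either the bias or the variance term---and that the minibatch-subsampling variance is correctly identified as $(\frac{N}{\tau}-1)N^2\Gamma_M$. Modulo that bookkeeping, the lemma follows.
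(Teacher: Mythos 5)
Your proposal is correct and matches the paper's treatment: the paper does not prove this lemma itself but imports it from \cite{ChenWZSC:arxiv17} ("rephrased in Lemma~\ref{lem:mse1}"), and your reconstruction---specializing the three-term MSE decomposition to $\eta_t\equiv\eta$, so that the three terms become $\frac{(\frac{N}{\tau}-1)N^2\Gamma_M}{T}$, $\frac{1}{T\eta}$, and $\eta^2$, and then minimizing $\frac{1}{T\eta}+\eta^2$ at $\eta^\star=(2T)^{-1/3}$---is exactly the derivation the paper carries out in its appendix for the decreasing-step-size analogue (Proposition~\ref{lem:mse}). One trivial slip: the stochastic-gradient term in the cited decomposition carries weight $\sum_t\eta_t^2/(\sum_t\eta_t)^2=\frac{1}{T^2}\sum_t$ rather than $\frac{1}{T}\sum_t$; your final substitution implicitly uses the correct weight (bounding $\Gamma_t$ by $\Gamma_M$ to get the $1/T$ factor), so nothing downstream is affected.
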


From Lemma~\ref{lem:mse1}, the optimal step size, {\it i.e.}, $\eta = O(T^{-1/3})$, is of a lower order than both our  differential-privacy-based algorithm ($\eta = O(T^{-1})$) and the algorithm in \cite{wang2015privacy}, {\it i.e.}, $\eta = O(T^{-1}\log^{-1} T)$. This means that for $T$ large enough, both ours and the method in \cite{wang2015privacy} might not run on the optimal step size setting. A remedy for this is to increase the step size at the cost of increasing privacy loss. Because for the same privacy loss, our step sizes are typically larger than in \cite{wang2015privacy}, our algorithm is able to obtain both higher approximate accuracy and differential privacy. Specifically, to guarantee the desired differential-privacy property as stated in Theorem~\ref{remark:fix_DP}, we substitute a step size of $\eta = \frac{\epsilon^2N}{c^2L^2Tlog(1/\delta)}$ into the MSE formula in Lemma~\ref{lem:mse1}. Consequently, the MSE is bounded by $\mathbb{E}\left(\hat{\phi}_L - \bar{\phi}\right)^2 
\leq C\left(\frac{(\frac{N}{\tau}-1)N^2\Gamma_M}{T} + \frac{c_2^2L^2\log \frac{1}{\delta}}{\epsilon^2 N} + \frac{\epsilon^4 N^2}{c_2^4L^4T^2\log^2(1/\delta)}\right)$, which is smaller than for the method in \cite{wang2015privacy}.

\section{Experiments}
\label{sec:experiment}

\begin{figure}
	\begin{center}
		\includegraphics[width=0.5\textwidth]{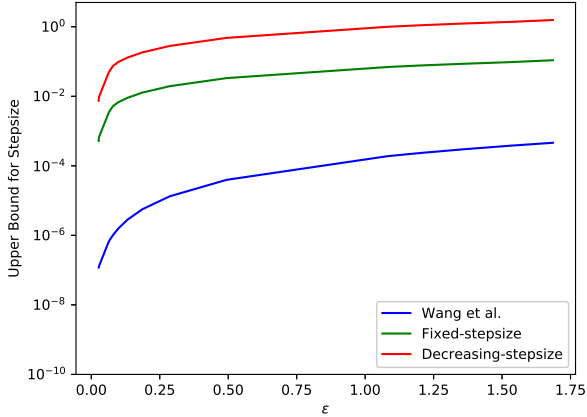}
		\caption{Upper bounds for fixed-step size and decreasing-step size with different privacy loss $\epsilon$, as well as the upper bound from \citet{wang2015privacy}.} 
		\label{fig:bound}
	\end{center}
\end{figure}
We test the proposed differentially-private SG-MCMC algorithms by considering several tasks, including logistic regression and deep neural networks, and compare with related Bayesian and optimization methods in terms of both algorithm privacy and utility. 

\subsection{Upper Bound}\label{sec:exp_upper}
\vspace{-0.1cm}
We first compare our upper bound for the step size in Section~\ref{sec:step size_bound} with the bound of \citet{wang2015privacy}. Note this upper bound denotes the largest step size allowed to preserve $(\epsilon,\delta)$-DP.

In this simulation experiment, we use the following setting: $N=50,000$, $T=10,000$, $L=1$, and $\delta=10^{-5}$. We vary $\epsilon$ from $0.02$ to $1.7$ for different differential-privacy settings, for both ours (fixed and decreasing-step size cases) and the bound in \cite{wang2015privacy}, with results in Figure~\ref{fig:bound}. It is clear that our bounds give much larger step sizes than from \cite{wang2015privacy} at a same privacy loss, {\it e.g.}, $10^{-1}$ vs.\! $10^{-4}$. Our step sizes appear to be much more practical in real applications.

In the rest of our experiments, we focus on using the decreasing-step size SGLD as it gives a nicer MSE bound as shown in Proposition~\ref{lem:mse}.
For the parameters in our bounds, {\it i.e.}, $(N,T,\epsilon,\delta, L)$, the default setting is often chosen to be $\delta=O(1/N)$ and $T=O(N)$; $L$ is typically selected from a range such as $L\in\{0.1, 1, 10\}$. In this experiment, we investigate the sensitivity of our proposed upper bound w.r.t.\! $N$ and $L$ when fixing other parameters. The results are plotted in Figure~\ref{fig:diffbound}, from which we observe that our proposed step size bound is stable in terms of the data size $N$, and is approximately proportional to $1/L$. Such a conclusion is not a direct implication from the upper bound formula in Theorem~\ref{theo:dp}, as the constant $c_2$ also depends on $(N,T,\epsilon,\delta,L)$.

\begin{figure}
	\begin{center}
		\includegraphics[width=0.5\textwidth]{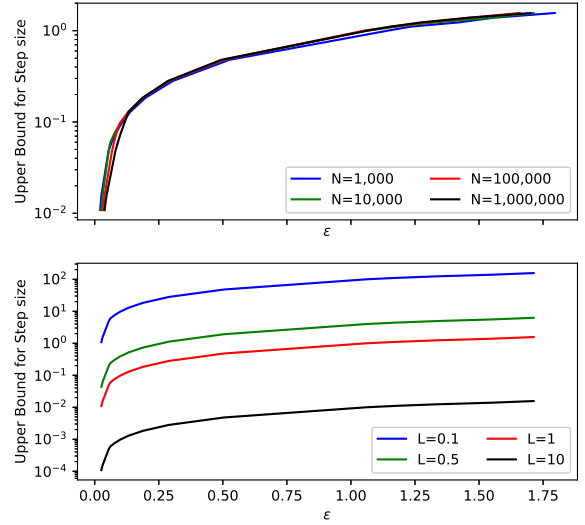}
		\caption{Step size upper bounds for $N=10^{3}, 10^{4}, 10^{5}, 10^{6}$ with fixed $L=1$ (top), and $L=0.1, 0.5, 1.0, 10.0$ with fixed $N=10^{4}$ (bottom). In both simulations, we let $\delta=1/N$ and $T=N$.}
		\label{fig:diffbound}
	\end{center}
\end{figure}

The result also indicates a rule for choosing step sizes in practice by using our upper bound, which fall into the range of $(10^{-4}, 0.1)$. When using such step sizes, we observe that the standard SGLD automatically preserves $(\epsilon,\delta)$-DP even when $\epsilon$ is small.

\subsection{Logistic Regression}

In the remaining experiments, we compare our proposed differentially-private SGLD (DP-SGLD) with other methods.
The Private Aggregation of Teacher Ensembles (PATE) model proposed in \citet{papernot2016semi} is the state-of-the-art framework for differentially private training of machine learning models. PATE takes advantage of the moment accountant method for privacy loss calculation, and uses a knowledge-transfer technique via semi-supervised learning, to build a teacher-student-based model. This framework first trains multiple teachers with private data; these teachers then differentially and privately release aggregated knowledge, such as label assignments on several public data points, to multiple students. The students then use the released knowledge to train their models in a supervised learning setting, or they can incorporate unlabeled data in a semi-supervised learning setting. The semi-supervised setting generally works for many machine learning models, yet it requires a large amount of non-private unlabeled data for training, which are not always available in practice. Thus, we did not consider this setting in our experiments.

We compare DP-SGLD with PATE and the Hybrid Posterior Sampling algorithm on the Adult data set from the UCI Machine Learning Repository \citep{Lichman:2013}, for a binary classification task with Bayesian logistic regression, under the DP setting. We fix $\delta=10^{-4}$, and compare the classification accuracy while varying $\epsilon$. We repeat each experiment ten times, and report averages and the standard deviations, as illustrated in Figure \ref{fig:glm}.
 
\begin{figure}
	\begin{center}
		\includegraphics[width=0.5\textwidth]{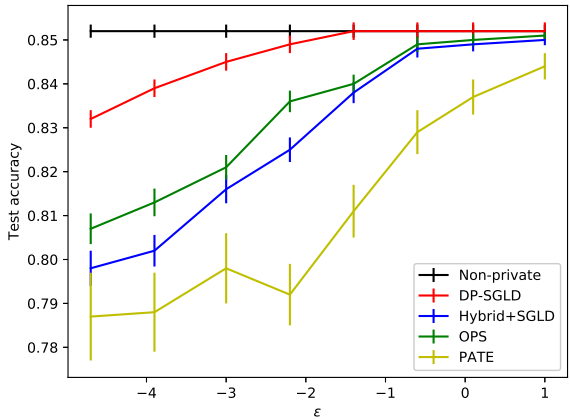}
		\caption{Test accuracies on a classification task based on Bayesian logistic regression for One-Posterior Sample (OPS), Hybrid Posterior sampling based on SGLD, and our proposed DP-SGLD with different choice of privacy loss $\epsilon$. The non-private baseline is obtained by standard SGLD.} 
		\label{fig:glm}
	\end{center}
\end{figure}

Our proposed DP-SGLD achieves a higher accuracy compared to other methods and is close to the baseline where the plain SGLD is used. In fact, when $\epsilon\approx 0.3$ or above, our DP-SGLD becomes the standard SGLD, therefore has the same test accuracy as the baseline. 
Note that PATE obtains the worst performance in this experiment. This might be because when $\epsilon$ is small and without unlabeled data, the students in this framework are restricted to using supervised learning with an extremely small amount of training data.

\subsection{Deep Neural Networks} 

We test our methods for training deep neural networks under differentially-private settings. We compare our methods with PATE and the DP-SGD proposed in \citet{abadi2016deep}. Since the performance of PATE highly depends on the availability of public unlabeled data, we allow it to access a certain amount of unlabeled data, though it is not a fair comparison to our method. We do not include the results with Hybrid Posterior sampling, as it does not converge due to its small step sizes in the experiments.

We use two datasets: ($i$) the standard MNIST dataset
for handwritten digit recognition, consisting of 60,000 training examples and 10,000 testing examples \citep{lecun-mnisthandwrittendigit-2010}; and ($ii$) the Street View House Number (SVHN) dataset, which contains 600,000
$32\times32$ RGB images of printed digits obtained
from pictures of house number in street view \citep{netzer2011reading}. We use the same network structure as for the PATE model, which contains two stacked convolutional layers and one fully connected layer with ReLUs for MNIST, and two more convolutional layers for SVHN. We use standard Gaussian priors for the weights of the DNN. For the MNIST dataset, the standard SGLD with step size $\eta_t=0.1t^{-1/3}$ satisfies $(\epsilon,\delta)$-DP for $\epsilon=0.10$ and $\delta=10^{-5}$ when we set $L=4$. For the SVHN dataset, the standard SGLD with step size $\eta_t=0.1t^{1/3}$ satisfies $(\epsilon,\delta)$-DP for $\epsilon=0.12$ and $\delta=10^{-6}$ when we set $L=1$. In both settings, we let $q=1/\sqrt{N}$ to satisfy the second condition in Theorem \ref{theo:dp}. In addition, we also ran a differentially-private version of the SGHMC for comparison. The test accuracies are shown in Table~\ref{table: nn}. 
It is shown that SGLD and SGHMC obtain better test accuracy than the state-of-the-art differential privacy methods, remarkably with much less privacy loss. They even outperformed the non-private baseline model using Adam, due to the advantages of Bayesian modeling.

\begin{table}[ht]
	\label{table: nn}
	\centering{
	\caption{Test accuracies on MNIST and and SVHN for different methods.}
	\begin{tabular}{ c |l| c|c| c }
		Dataset &Methods &$\epsilon$ &$\delta$ & Accuracy\\\hline
	  &Non-Private &  &  & 99.23\%\\\cline{2-5}
	  &PATE(100) &$2.04$&$10^{-5}$ & 98.00\% \\\cline{2-5}
	  MNIST&PATE(1000) &$8.03$&$10^{-5}$ & 98.10\% \\\cline{2-5}
	  &\textbf{DP-SGLD} &$0.10$&$10^{-5}$& 99.12\% \\\cline{2-5}
	  &\textbf{DP-SGHMC} &$0.24$&$10^{-5}$&$\mathbf{99.28}$\%\\ 
	  \hline

	  &Non-Private &  &  & 92.80\%\\\cline{2-5}
	  &PATE(100) &$5.04$&$10^{-6}$ & 82.76\% \\\cline{2-5}
	  SVHN&PATE(1000) &$8.19$&$10^{-6} $ & 90.66\% \\\cline{2-5}
	  &\textbf{DP-SGLD} &$0.12$&$10^{-6}$& 92.14\% \\\cline{2-5}
	  &\textbf{DP-SGHMC} &$0.43$&$10^{-6}$&$\mathbf{92.84}$\%\\
	  \hline
	\end{tabular}
	}
\end{table}
 
\section{Related Work}
\label{sec:relatedwork}
There have been several papers that have considered differentially-private stochastic gradient based methods. For example, \citet{song2013stochastic} proposed a differentially-private stochastic gradient descent (SGD) algorithm, which requires a large amount of noise when mini-batches are randomly sampled. The theoretical performance of noisy SGD is studied in \citet{bassily2014differentially} for the special case of convex loss functions. Therefore, for a non-convex loss function, a common setting for many machine learning models, there are no theoretical guarantee on performance. In \citet{abadi2016deep} another differentially private SGD was proposed, requiring a smaller variance for added Gaussian noise, yet it still did not provide theoretical guarantees on utility. On the other hand, the standard SG-MCMC has been shown to be able to converge to the target posterior distribution in theory. In this paper, we discuss the effect of our modification for differential privacy on the performance of the SG-MCMC, which endows theoretical guarantees on the bounds for the mean squared error of the posterior mean. 

Bayesian modeling provides an effective framework for privacy-preserving data analysis, as posterior sampling naturally introduces noise into the system, leading to differential privacy \citep{dimitrakakis2014robust,wang2015privacy}. In \citet{foulds2016theory}, the privacy for sampling from exponential families with a Gibbs sampler was studied. In \citet{wang2015privacy} a comprehensive analysis was proposed on the differential privacy of SG-MCMC methods. As a comparison, we have derived a tighter bound for the amount of noise required to guarantee a certain differential privacy, yielding a more practical upper bound for the step size.

\section{Conclusion}
\label{sec:conclusion}
Previous work on differential privacy has modified existing algorithms, or has built complicated
frameworks that sacrifice a certain amount of performance for privacy. In some cases the privacy loss may be relatively large. This paper has addressed a privacy analysis for SG-MCMC, a standard class of methods for scalable posterior sampling for Bayesian models. We have significantly relaxed the condition for SG-MCMC methods being differentially private, compared to previous works. Our results indicate that standard SG-MCMC methods have strong privacy guarantees for problems in large scale. In addition, we have proposed theoretical analysis on the estimation performance of differentially private SG-MCMC methods. Our results show that even when there is a strong privacy constraint, the differentially private SG-MCMC still endows a guarantee on the model performance. Our experiments have shown that with our analysis, the standard SG-MCMC methods achieve both state-of-the-art utility and strong privacy compared with related methods on multiple tasks, such as logistic regression and deep neural networks. 

Our results also shed lights onto how SG-MCMC methods help improving the generalization for training models, as it is well acknowledged that there is a connection between differential privacy and generalization for a model (cite Learning with Differential Privacy: Stability, Learnability and the Sufficiency and Necessity of ERM Principle). For example, in \cite{SaatchiW:NIPS17}, a Bayesian GAN model trained with SGHMC is proposed and shows promising performance in avoiding mode collapse problem in GAN training. According to \cite{AroraGLMZ:ICML17}, the mode collapse problem is potentially caused by weak generalization. Therefore, it is very likely that Bayesian GAN moderated mode collapse problem because SGHMC naturally leads to better generalization.

\bibliography{references.bib}
\bibliographystyle{plainnat}

\newpage

\appendix

\section{Proof of Theorem 3}\label{app:theorem3}

We first prove Algorithm \ref{alg:1} is $(\epsilon,\delta)$-DP if we change the variance of $\mathbf{z}_t$ to be $\sigma^2_t=\frac{c_2^2L^2T^{2/3}t^{1/3}\log(1/\delta)}{\epsilon^2N^2}\eta_t^2I$ for some constant $c_2$.

It is easy to see that SGLD in Algorithm~\ref{alg:1} consists of a sequence of updates for the model parameter $\thetab$. Each update corresponds to a random mechanism $\mathcal{M}_i$ defined in Theorem~\ref{theorem:1}, thus we will first derive the moments accountant for each iteration. In each iteration, the only data access is $\sum_{i\in J_t}\tilde{g}_t(\db_i)$ in Step 6. Therefore, in the following, we only focus on the interaction between $\sum_{i\in J_t}\tilde{g}_t(\db_i)$ and the noise $\mathbf{z}_t$, which is essentially\footnote{In this paper, we only consider the case for which we choose priors that do not depend on the data, as is common in the Bayesian setting.} $\frac{\eta_t}{\tau}\sum_{i\in J_t}\bar{g}_t(\db_i) +\mathbf{z}_t$.

To simplify the notation, we let $\tilde{\eta}^2=\frac{\sigma^2_t\tau^2}{L^2\eta_t^2t^{1/3}}$, and the variance of $\mathbf{z}_t$ can be rewritten as $\sigma_t^2=(\tilde{\eta}^2L^2\eta_t^2t^{1/3}/\tau^2)I$\footnote{Later we will show the optimal decreasing ratio for the step size is $t^{1/3}$.}. Then we have:
 {\small\begin{align*}
 \frac{\eta_t}{\tau}\sum_{i\in J_t}\bar{g}_t(\db_i) +\mathbf{z}_t
 &=\frac{\eta_t}{\tau}\left(\sum_{i\in J_t}\bar{g}_t(\db_i) +N(0, (\sigma_t^2\tau^2/\eta^2_t)I)\right)\\
 &=\frac{\eta_tL}{\tau}\left(\frac{1}{L}\sum_{i\in J_t}\bar{g}_t(\db_i) +N(0, \tilde{\eta}^2t^{1/3}I)\right) 
 \end{align*}}

If we let $f(\mathbf{d}_i)=\frac{1}{L}\hat{g}_t(\mathbf{d}_i)$ and $\sigma^2=\tilde{\eta}^2t^{1/3}$, we can apply Lemma \ref{lemma:1} to calculate the upper bound for the log moment of the privacy loss random variable for the $t^{\text{th}}$ iteration to be $$\alpha(\lambda)\leq t^{-1/3}q^2\lambda^2/\tilde{\eta}^2$$
as long as the conditions in Lemma \ref{lemma:1} are satisfied, that is $\tilde{\eta}^2t^{1/3}\geq 1$ and the mini-batch sampling probability $q<\frac{1}{16\tilde{\eta}t^{1/6}}$.

Using the composability property of the moments accountant in Theorem~\ref{theorem:1}, over $T$ iterations, the log moment of the privacy loss random variable is bounded by 
\begin{align*}
 \alpha(\lambda)\leq \sum_{t=1}^T(t^{-1/3})q^2\lambda^2/\tilde{\eta}^2~.
\end{align*}

According to the tail bound property in Theorem \ref{theorem:1}, $\delta$ is the minimum of $\exp\left(\alpha_{\mathcal{M}}(\lambda) - \lambda\epsilon\right)$ w.r.t.\! $\lambda$. However, since $\lambda$ is an integer, a closed form for this minimum is generally intractable. Nevertheless, to guarantee $(\epsilon,\delta)$-DP, it suffices that

\begin{align}\label{eq:condition1}
 \sum_{t=1}^T(t^{-1/3})q^2\lambda^2/ \tilde{\eta}^2\leq \lambda\epsilon/2,\ \ \ \exp(-\lambda\epsilon/2)\leq \delta~,
\end{align}

We also require that our choice of parameters satisfies Lemma \ref{lemma:1}. Consequently, we have 
\begin{align}\label{eq:condition2}
 \lambda\leq\tilde{\eta}^2t^{1/3}\log(1/q\tilde{\eta}^2t^{1/3})\leq \tilde{\eta}^2\log(1/q\tilde{\eta}^2)
\end{align} 
Since $\sum_{t=1}^T t^{-1/3}=O(T^{2/3})$, we can use a similar technique\footnote{Further explained in Section~\ref{app:constants} of the SM.} as in \citet{abadi2016deep} to find explicit constants $c_1$ and $c_2$ such that when $\epsilon=c_1q^2T^{2/3}$ and $\tilde{\eta}=c_2\frac{q\sqrt{T^{2/3}\log(1/\delta)}}{\epsilon}$, the conditions \eqref{eq:condition1} \eqref{eq:condition2} are satisfied. If we plug in $\tilde{\eta}$ and $q$, we have proved that Algorithm \ref{alg:1} is $(\epsilon,\delta)$-DP when $\mathbf{z}_i\sim N(0,\frac{c_2^2L^2T^{2/3}t^{1/3}\log(1/\delta)}{\epsilon^2N^2}\eta_t^2I)$.
 
For the second step of the proof, we prove that Algorithm~\ref{alg:1} is $(\epsilon, \delta)$-DP when the original variance of $\mathbf{z}_t$ is used, {\it i.e.}, $\sigma_t = \frac{\eta_t}{N}$. This is straightforward because when $\eta_t<\frac{\epsilon^2Nt^{-1/3}}{c_2^2L^2T^{2/3}\log(1/\delta)}$ we have $\frac{c_2^2L^2T^{2/3}t^{1/3}\log(1/\delta)}{\epsilon^2N^2}\eta_t^2<\eta_t/N$ as long as the step size $\eta_t$ is positive. Adding more noise decreases the privacy loss. To satisfy $(\epsilon,\delta)$-DP, it suffices to set the variance of $\mathbf{z}_i$ as $\eta_t/N$, which gives the original Algorithm \ref{alg:1}, a variant of the standard SGLD algorithm with decreasing step size.

In addition, Lemma \ref{lemma:1} requires $\tilde{\eta}^2t^{1/3}\geq 1$ and $q<\frac{1}{16\tilde{\eta}t^{1/6}}$. Note $\eta_t=\frac{N}{t^{1/3}\tilde{\eta}^2L^2}$, we also need to ensure $\eta_t\leq \frac{N}{L^2}$ and $\eta_t>\frac{q^2N}{256L^2}$.

\section{Assumptions on SG-MCMC Algorithms}\label{app:assumption}

For the diffusion in \eqref{eq:ito}, we first define the generator $\mathcal{L}$ as:
\begin{align}
\mathcal{L} \psi \triangleq \frac{1}{2}\nabla\psi \cdot F + \frac{1}{2}g(\thetab) g(\thetab)^{*}:D^2 \psi~,
\end{align}
where $\psi$ is a measurable function, $D^k \psi$ means the $k$-derivative of $\psi$, $*$ means transpose. $\ab\cdot\bb \triangleq \ab^T\bb$ for two vectors $\ab$ and $\bb$, $\Ab:\Bb \triangleq \mbox{trace}(\Ab^T\Bb)$ for two matrices $\Ab$ and $\Bb$. Under certain assumptions, there exists a function, $\phi$, such that the following Poisson equation is satisfied \cite{MattinglyST:JNA10}:
\begin{align}\label{eq:poissoneq}
\mathcal{L} \psi = \phi - \bar{\phi}~,
\end{align}
where $\bar{\phi} \triangleq \int \phi(\thetab) \rho(\mathrm{d} \thetab)$ denotes the model average, with $\rho$ being the equilibrium distribution for the diffusion \eqref{eq:ito}, which is assumed to coincide with the posterior distribution $p(\thetab|\mathcal{D})$. The following assumptions are made for the SG-MCMC algorithms \citep{VollmerZT:arxiv15,ChenDC:NIPS15}.

\begin{assumption}\label{ass:assumption1}
	The diffusion \eqref{eq:ito} is ergodic. Furthermore, the solution of \eqref{eq:poissoneq} exists, and the solution functional $\psi$ satisfies the following properties:
	\begin{itemize}
		\item $\psi$ and its up to 3th-order derivatives $\mathcal{D}^k \psi$, are bounded by a function $\mathcal{V}$, {\it i.e.}, $\|\mathcal{D}^k \psi\| \leq C_k\mathcal{V}^{p_k}$ for $k=(0, 1, 2, 3)$, $C_k, p_k > 0$.
		\item The expectation of $\mathcal{V}$ on $\{\xb_{l}\}$ is bounded: $\sup_l \mathbb{E}\mathcal{V}^p(\xb_{l}) < \infty$.
		\item $\mathcal{V}$ is smooth such that $\sup_{s \in (0, 1)} \mathcal{V}^p\left(s\xb + \left(1-s\right)\yb\right) \leq C\left(\mathcal{V}^p\left(\xb\right) + \mathcal{V}^p\left(\yb\right)\right)$, $\forall \xb \in \mathbb{R}^m, \yb \in \mathbb{R}^m, p \leq \max\{2p_k\}$ for some $C > 0$.
	\end{itemize}
\end{assumption} 

\section{Calculating Constants in Moment Accountant Methods}\label{app:constants}
For calculating the constants $c_1$ and $c_2$, which is a part of the moment accoutant method, we refer to \url{https://github.com/tensorflow/models} \footnote{This is under the Apache License, Version 2.0} as an implimentation of the moment accountant method. A comprehensive description for the implimentation can be found int \citet{abadi2016deep}. 

This code allows one to calculate the corresponding $\epsilon(\delta)$ given $\delta(\epsilon),q,T,\eta_0$ by enumerating all the possible integers under a certain threhosld as the candidate value of $\lambda$ and selecting the one that minimizes $\epsilon(\delta)$. Once $\epsilon(\delta)$ is determined, it is easy to calculate $c_1$ and $c_2$ for evaluating the upper bound for the step size.

\section{Proof of Theorem~\ref{remark:fix_DP}}\label{app:fixed_DP}

Claim: Under the same setting as Theorem~\ref{theo:dp}, but using a fixed-step size $\eta_t = \eta$, Algorithm \ref{alg:1} satisfies $(\epsilon,\delta)$-DP whenever $\eta<\frac{\epsilon^2N}{c^2L^2Tlog(1/\delta)}$ for another constant $c$.

\begin{proof}
The only change of the proof for fixed step size is that the expression for the variance of the Gaussian noise $\mathbf{z}_t$ becomes $\sigma^2_t=\eta_0^2L^2\eta_t^2/\tau^2$ for fixed step size. We still apply Theorem \ref{theorem:1} and Lemma \ref{lemma:1} to find the required conditions for $(\epsilon, \delta)$-DP:
$$T q^2\lambda^2/\eta_0^2\leq \lambda\epsilon/2$$
$$\exp(-\lambda\epsilon/2)\leq \delta, \lambda\leq\eta_0^2\log(1/q\eta_0)$$

Using the method described in the previous section, one can find $c_3$ and $c_4$ such that when $\epsilon=c_3q^2T$ and $\eta_0=c_4\frac{q\sqrt{\log(1/\delta)}}{\epsilon}$ satisfy the above conditions. Then if we plug in $\eta_0$ and $q$, and compare it to $\eta/N$, it is easy to see Algorithm \ref{alg:1} satisfies $(\epsilon,\delta)$-DP when $\eta<\frac{\epsilon^2N}{c_4^2L^2T\log(1/\delta)}$.
\end{proof}

\section{Proof of Proposition~\ref{lem:mse}}

	Claim: Under Assumption~\ref{ass:assumption1} in the section~\ref{app:assumption}, the MSE of SGLD with a decreasing step size sequence $\{\eta_t<\frac{\epsilon^2Nt^{-1/3}}{c_2^2L^2T^{2/3}\log(1/\delta)}\}$ as in Theorem~\ref{theo:dp} is bounded, for a constant $C$ independent of $\{\eta, T, \tau\}$ and a constant $\Gamma_M$ depending on $T$ and $U(\cdot)$, as $\mathbb{E}\left(\hat{\phi}_L - \bar{\phi}\right)^2$
	\begin{align*} 
		\leq C\left(\frac{2}{3}\left(\frac{N}{n}-1\right)N^2\Gamma_MT^{-1}+\frac{1}{3\tilde{\eta}_0}+2\tilde{\eta}_0^2T^{-2/3}\right)~.
		\end{align*}
	where $\tilde{\eta}_0\triangleq\frac{\epsilon^2}{c_2^2L^2\log(1/\delta)}.$

\begin{proof}
	
First, we adopt the MSE formula for the decreasing-step-size SG-MCMC with Euler integrator (1-st order integrator) from Theorem~5 of \cite{ChenDC:NIPS15}, which is written as
\begin{align}\label{eq:msebound}
	\mathbb{E}\left(\hat{\phi}_L - \bar{\phi}\right)^2 \leq C\left(\sum_{t=1}^T \frac{\eta_t^2}{S_T^2}\mathbb{E}\left\|\Delta V_t\right\|^2 + \frac{1}{S_T} + \frac{(\sum_{t=1}^T \eta_t^{2})^2}{S_T^2} \right)~,
\end{align}
where $S_T \triangleq \sum_{t = 1}^T\eta_t$, and $\Delta V_t$ is a term related to $\tilde{g}_t$, which, according to Theorem~3 of \cite{ChenWZSC:arxiv17}, can be simplified as
\begin{align}\label{eq:deltav}
	&\mathbb{E}\left|\Delta V_l\right|^2 \nonumber\\
	=&\frac{(N - \tau)N^2}{\tau}\left(\frac{1}{N^2}\sum_{i,j}\mathbb{E}\alphab_{li}\alphab_{lj} - \frac{2}{N(N-1)}\sum_{i\leq j}\mathbb{E}\alphab_{li}\alphab_{lj}\right) \nonumber\\
	\triangleq& \frac{(N - \tau)N^2}{\tau} \Gamma_t~.
\end{align}

Let $\Gamma_M \triangleq \max_t \Gamma_t$. Substituting \eqref{eq:deltav} into \eqref{eq:msebound}, we have

\begin{align}\label{eq:msebound1}
&\mathbb{E}\left(\hat{\phi}_L - \bar{\phi}\right)^2 \leq \\
&C\left(\frac{\sum_t^T \eta_t^2}{\left(\sum_t^T\eta_t\right)^2}\left(\frac{N}{\tau}-1\right)N^2\Gamma_M+\frac{1}{\sum_t^T\eta_t}+\frac{\left(\sum_t^T\eta_t^2\right)^2}{\left(\sum_t^T\eta_t\right)}\right) \nonumber
\end{align}
Now, if we assume $\tilde{\eta}_0=\frac{\epsilon}{c_2^2L^2\log(1/\delta)}$, then we rewrite $\eta_t=\eta_0t^{-1/3}T^{-2/3}$.

Note $\sum_t^T t^p\approx \frac{1}{p+1}T^{p+1}$. Plug this into the bound in \eqref{eq:msebound1}, we have:

\begin{align*}
	&\mathbb{E}\left(\hat{\phi}_L - \bar{\phi}\right)^2 \leq \\
	&C\left(\frac{\sum_t^T \eta_t^2}{\left(\sum_t^T\eta\right)^2}\left(\frac{N}{\tau}-1\right)N^2\Gamma_M+\frac{1}{\sum_t^T\eta_t}+\frac{\left(\sum_t^T\eta_t^2\right)^2}{\left(\sum_t^T\eta_t\right)^2}\right)\\
	\leq&C\left(\frac{2}{3}\left(\frac{N}{\tau}-1\right)N^2\Gamma_MT^{-1}+\frac{1}{3\tilde{\eta}_0}+2\tilde{\eta}_0^2T^{-2/3}\right)
\end{align*}
\end{proof}

\end{document}